\newcommand{\E}{\ensuremath{\mathbf{E}}}
\newcommand{\I}{\ensuremath{\mathbf{I}}}
\newcommand{\M}{\ensuremath{\mathbf{M}}}
\newcommand{\U}{\ensuremath{\mathbf{U}}}
\newcommand{\V}{\ensuremath{\mathbf{V}}}
\newcommand{\p}{\ensuremath{\mathbf{p}}}
\newcommand{\q}{\ensuremath{\mathbf{q}}}
\newcommand{\uu}{\ensuremath{\mathbf{u}}}
\newcommand{\vv}{\ensuremath{\mathbf{v}}}
\newcommand{\w}{\ensuremath{\mathbf{w}}}
\newcommand{\x}{\ensuremath{\mathbf{x}}}
\newcommand{\y}{\ensuremath{\mathbf{y}}}
\newcommand{\z}{\ensuremath{\mathbf{z}}}
\newcommand{\0}{\ensuremath{\mathbf{0}}}
\newcommand{\bbE}{\ensuremath{\mathbb{E}}}
\newcommand{\bbR}{\ensuremath{\mathbb{R}}}
\newcommand{\calC}{\ensuremath{\mathcal{C}}}
\newcommand{\calO}{\ensuremath{\mathcal{O}}}
\newcommand{\norm}[1]{\left\lVert#1\right\rVert}
\newcommand{\traceop}{\operatorname{tr}}
\newcommand{\trace}[1]{\ensuremath{\traceop\left(#1\right)}}
\newtheorem{thm}{Theorem}
\newtheorem{lem}[thm]{Lemma}
\newtheorem{rmk}[thm]{Remark}
\DeclareMathOperator*{\argmin}{arg\,min}
\newcommand{\ie}{i.e.\@}
\newcommand{\eg}{e.g.\@}
\newcommand{\sbr}[1]{\left[#1\right]}
\newcommand{\rbr}[1]{\left(#1\right)}
\newcommand{\cbr}[1]{\left\{#1\right\}}
\newcommand{\dotp}[2]{\langle{#1},\,{#2}\rangle}
\newcommand{\tx}{\tilde{\x}}
\newcommand{\tz}{\tilde{\z}}
\newcommand{\tU}{\tilde{\U}}
\newcommand{\tV}{\tilde{\V}}
\newcommand{\hv}{\hat{\vv}}
\newcommand{\hw}{\hat{\w}}
\newcommand{\hL}{\hat{L}}
\newcommand{\hF}{\hat{F}}
\newcommand{\hG}{\hat{G}}
\newcommand{\tw}{\tilde{\w}}
\newcommand{\tv}{\tilde{\vv}}
\newcommand{\Exz}{\E_{xz}}
\newcommand{\Exx}{\E_{xx}}
\newcommand{\Ezz}{\E_{zz}}
\begin{document}

\title{Everything old is new again: \\
A multi-view learning approach to learning using privileged information and distillation}
\author{Weiran Wang \\
weiranw@amazon.com}
\maketitle

\begin{abstract}
We adopt a multi-view approach for analyzing two knowledge transfer 
settings---learning using privileged information (LUPI) and distillation---in
a common framework. 
Under reasonable assumptions about the complexities of hypothesis spaces,
and being optimistic about the expected loss achievable by the student (in
distillation) and a transformed teacher predictor (in LUPI), 
we show that encouraging agreement between the
teacher and the student leads to reduced search space. As a result,
improved convergence rate can be obtained with regularized empirical risk minimization.
\end{abstract}

\section{Introduction}
\label{sec:intro}

Transferring knowledge learned by a powerful model (``teacher'') to a simpler model (``student'') has become a theme in machine learning. The goal of the knowledge transfer is to have the teacher guide the learning process of the student, so as to achieve high prediction accuracy, or to reduce the sample complexity, which are otherwise hard for the student to achieve by itself. This learning paradigm is practically useful when it is necessary to deploy simpler models to real-world systems, which requires small memory footage or fast processing time. 

We focus on two specific settings of knowledge transfer in this work. The first one is \emph{learning using privileged information (LUPI)}~\citep{VapnikVashis09a}, in which the teacher provides an additional set of feature representation to the student during its training process but not the test time, and the extra feature set contains richer information to make the learning problem easier for the student; an example is that the ``student may normally only have access to the image of a biopsy to predict the existence of cancer, but during the training process, it also has access to the medical report of an oncologist''~\citep{Lopez_15a}. 
The second setting is \emph{distillation}~\citep{BaCaruan14a,Hinton_15a}, in which the teacher and the student have access to the same samples (and the same features), and the teacher learns a complex model (\eg, deep neural networks), which provides soft targets (\eg, the decision values) for the student (\eg, a shallow neural network) to mimic directly during its training process. 
\citet{Lopez_15a}  have unified the two settings as \emph{generalized distillation}, which facilitates learning from multiple machines and data representations. We take the same unified view as~\citet{Lopez_15a} in this work and analyze different settings in a common learning framework, while using intuitions and assumptions specific to LUPI or distillation when appropriate. 
While in LUPI the learning task is often supervised, the constraint that privileged information is only available during training resembles that of (unsupervised) multi-view representation learning~\citep{Wang_15b}, where the goal is to learn feature transformations from multiple measurements (``views'') of the input data that can be useful for downstream tasks (which are often supervised).

With the intuition of the teacher making the learning problem easier for the student, the LUPI setting is theoretically motivated~\citep{VapnikVashis09a,PechyonVapnik10a}. For example, for binary supervised learning (with the 0-1 loss), when $n$ training samples are used, the convergence rate is $\calO \rbr{ \frac{1}{n} }$ in the separable case (\ie, when there exists a perfect classifier), and the rate degrades to $\calO \rbr{ \frac{1}{\sqrt{n}} }$ when the problem is not.  However, when learning an SVM classifier in the non-separable case, if the teacher can supply the student with the slack variables associated with the optimal solution for each sample (thus setting the correct limit on each sample), the learning objective becomes separable for the student and the faster $\calO \rbr{\frac{1}{n}}$ rate can be restored. It has been hypothesized that faster convergence rate can in general be obtained under LUPI~\citep{Lopez_15a}, although it is not clear under what conditions and using what learning method this is achievable.

\paragraph{Our contributions} 
We can consider the normal feature set and the privileged information as two ``views'' of the input data.
There exists a long line of (both theoretical and empirical) research studying the multi-view learning setting, supervised or not. And the most popular intuition in multi-view learning is that the views need to agree, either on the features~\citep{HermanBlunsom14a,Wang_15b,Fang_15a} or on the predictions~\citep{BlumMitchel98a,Sindhw_05b,RosenbBartlet07a,KakadeFoster07a,BalcanBlum10a,BlumMansour17a}. We heavily draw inspirations from these prior work and show how, under reasonable assumptions on the model complexities for distillation and LUPI, agreement between the views help reduce the search space for the student and therefore leads to faster convergence. 

In particular, we analyze distillation and LUPI in the framework of learning linear predictors with convex, smooth and non-negative losses, which are commonly used in machine learning.
We show that improved rate is possible by being \emph{optimistic}, that is, assuming the student \emph{can} achieve low expected loss for distillation, and assuming a good predictor on the privileged information can be transformed into a good predictor on the regular feature set for LUPI. Under these assumptions, we perform regularized ERM where the regularization term measures the prediction discrepancy (squared difference) between the student and the teacher; this is an intuitive and easy to optimize regularizer and often used in practice. The solution to regularized ERM achieves faster convergence than the one without regularization, or equivalently, there exists a larger range of optimal loss for which the student achieves the $\calO \rbr{\frac{1}{n}}$ convergence.
Interestingly, in the multi-view distillation and LUPI settings, the complexity control is achieved through the coordinate system defined by Canonical Correlation Analysis~\citep{Hotell36a}.  

\paragraph{Notations} Bold lower case letters (such as $\uu$, $\vv$) denote vectors, and bold upper case letters (such as $\U$, $\V$) denote matrices. For vectors, subscripts index samples of a random vector, while superscripts index the coordinates.
$[\vv^i]_{i=1}^d$ denote the $d$-dimensional vector whose $i$-th
coordinate is $\vv^i$. We use $\0$ and $\I$ to denote the all-zero vector
and identity matrix respectively, whose dimensions can be inferred from the
context. 
A convex function $f(\w)$ is $\beta$-smooth in $\w$
if $f(\w_1) \le f(\w_2) + \nabla f(\w_2)^\top (\w_1 - \w_2) +
\frac{\beta}{2} \norm{\w_1 - \w_2}^2$, $\forall \w_1, \w_2$, and it is
$\sigma$-strongly convex in $\w$ if 
$f(\w_1) \ge f(\w_2) + \nabla f(\w_2)^\top (\w_1 - \w_2) +
\frac{\sigma}{2} \norm{\w_1 - \w_2}^2$, $\forall \w_1, \w_2$.

\subsection{A brief review on CCA}
\label{sec:cca-review}

We briefly review Canonical Correlation Analysis (CCA,~\citealp{Hotell36a}), a classical method for measuring the correlation between two random vectors, as it plays an important role later.  
Let $\x \in \bbR^{d_x}$ and $\z \in \bbR^{d_z}$ be two random vectors with
a joint probability distribution $P(\x,\z)$. The simultaneous
formulation of CCA finds a set of $r$ directions
(canonical directions) for each view, collected in matrices $\U \in \bbR^{d_x \times r}$ and $\V\in \bbR^{d_z \times r}$, such that projections of $(\x,\y)$ onto these directions are maximally correlated:
\begin{gather} \label{e:cca-population}
  \max_{\U,\V} \;  \trace{ \U^\top \Exz \V}   \quad  \text{s.t.} \quad \U^\top \Exx \U =\V^\top \Ezz \V = \I 
\end{gather}
where the cross- and auto-covariance matrices are defined as 
\begin{gather} \label{e:cov-population}
\Exz = \bbE [ \x \z^\top],  \quad \Exx = \bbE [ \x \x^\top],  \quad \Ezz = \bbE [ \z \z^\top].
\end{gather}
In this work, we assume for simplicity that the random vectors have zero
mean ($\bbE [\x] = \0$ and $\bbE [\z] = \0$), and identity covariance ($\Exx=\I$ and $\Ezz=\I$).
The global optimum of~\eqref{e:cca-population}, denoted by $(\U^*,\V^*)$, can be computed as follows. 
Let the full SVD of 
$\Exz$ be
\begin{align*}
\Exz = \tU \Sigma \tV^\top, 
\end{align*}
where $\tU \in \bbR^{d_x \times d_x}$ and $\tV \in \bbR^{d_z \times d_z}$ are unitary, and $\Sigma$ contains the singular values (canonical correlations) on its diagonal:
\begin{align*}
1 \ge \lambda_1 \ge \dots \ge \lambda_{rank(\Exz) > 0}.
\end{align*}
Then the optimal directions are
\begin{align*}
(\U^*,\,\V^*)=(\tU_{1:r},\,\V_{1:r} ), 
\end{align*}
where $\U_{1:r}$ denote the submatrix of $\U$ containing the first $r$ columns,
and the optimal objective value is $\sum_{i=1}^r \lambda_i$. 
We refer the readers to~\citet{Gao_17a,ZhuLi16a} for efficient numerical procedures
for computing the solution.

Note that the canonical directions and correlations are derived solely from the
multi-view input distribution $P(\x,\z)$ (regardless of the label
information), and they define a new
coordinate system than that of the input space. As we will see later, based on
certain assumptions, this new system facilitates complexity
control for statistical learning.

\section{Learning with smooth and non-negative loss}
\label{sec:smooth-loss}

We now discuss the learning setup in which we analyze different knowledge
transfer settings. 
Consider linear predictors for supervised learning: given
i.i.d. samples of random variables $(\x,y)$ drawn from an unknown
distribution $D$, we would like to learn a linear predictor $\w$ to
predict the target $y$ from the input $\x$, based on the inner product
$\w^\top \x$, and the discrepancy between the prediction and target is
measured by a instantaneous loss function $\ell(\w^\top \x, y)$. 
Let $L(\w) = \bbE_{D} [\ell(\w^\top \x, y)]$ be the expected loss associated with the predictor $\w$. 
With $n$ i.i.d. samples $\cbr{(\x_1,y_1),\dots, (\x_n,y_n)}$,
the empirical loss is defined as $\hL(\w)=\frac{1}{n} \sum_{i=1}^n
\ell(\w^\top \x_i, y_i)$. 

We assume the loss $\ell(a, b) > 0$ is convex and $\beta$-smooth in the
first argument $a$. Such losses are widely used in machine learning. For example, the least
squares loss $\frac{1}{2} (a - b)^2$ is $1$-smooth in $a$, and the
cross-entropy loss $\log \rbr{1+\exp(-b \cdot a)}$ is $\frac{1}{4}$-smooth in $a$. 
In addition to $\Exx=\I$ (as in Sec~\ref{sec:cca-review}), we further assume 
$\norm{\x} \le R$, implying that $\ell(\w^\top \x, y)$ is $(\beta
R^2)$-smooth in $\w$. Sometimes we also assume the loss $\ell(a,
b)$ to be strongly-convex in $a$; the least squares
loss is $1$-strongly convex, and the logistic loss is strongly convex as
long as the first argument (decision value) is bounded.

The goal of convex-smooth-bounded
learning~\citep[Sec~12.2.2]{ShalevBen-David14a} is to find a good
predictor $\hw$ such that
\begin{align*}
L(\hw) \le \min_{\norm{\w} \le B}\; L (\w) + \epsilon
\end{align*}
where $\cbr{ \w: \norm{\w}\le B }$ is the hypothesis class we would like
to learn, and $\epsilon>0$ is the excess risk.  
Such predictors can be learned (properly) by solving the constrained ERM problem, or improperly by solving the regularized ERM. 
\begin{lem}[\citet{Srebro_12a}]
\label{lem:optimistic}
Let $L^* = \min_{\norm{\w} \le B}\, L (\w)$ be the optimal expected loss.
Then for either of the following estimators
\begin{align*}
& \text{(constrained ERM)} \qquad \hw = \argmin_{\norm{\w} \le B}\; \hL
  (\w) \\
& \text{(regularized ERM)} \qquad \hw = \argmin\; \hL (\w) + \frac{\gamma}{2} \norm{\w}^2
\end{align*}
where $\gamma = \frac{8 \beta R^2}{n} + \sqrt{\frac{64 \beta^2 R^4}{n^2} + \frac{16 \beta R^2 L^*}{n B^2}}$,
we have
\begin{align} \label{e:optimistic-rate}
\bbE [L(\hw) - L^*] = \tilde{\calO} \rbr{ \frac{\beta R^2 B^2}{n} + \sqrt{\frac{\beta R^2 B^2 L^*}{n}} }
\end{align}
where the expectation is over the $n$ training samples.
\end{lem}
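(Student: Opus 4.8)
The plan is to exploit the defining feature of a non-negative $\beta$-smooth loss, namely its \emph{self-bounding} property: differentiating the smoothness inequality and using $\ell \ge 0$ yields $\rbr{\partial_a \ell(a,b)}^2 \le 2\beta\, \ell(a,b)$. Consequently the per-sample map $\w \mapsto \ell(\w^\top \x, y)$ behaves like a Lipschitz function whose \emph{local} Lipschitz constant is $R\sqrt{2\beta\,\ell}$, i.e.\ it scales with the square root of its own value. This is precisely the mechanism that turns a small optimal loss $L^*$ into a small generalization gap, and every term in the target rate~\eqref{e:optimistic-rate} should be traceable to it.

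First I would record the complexity of the linear class: since $\norm{\x}\le R$, the empirical Rademacher complexity of $\cbr{\x\mapsto \w^\top \x : \norm{\w}\le B}$ over $n$ samples is $\calO\rbr{BR/\sqrt n}$. Second, I would upgrade this to a \emph{local} bound on the composed loss class by combining the contraction inequality with the self-bounding property: restricting to predictors whose (empirical) loss is at most $r$, the Rademacher complexity of the loss class is $\calO\rbr{\sqrt{\beta}\,BR\sqrt{r}/\sqrt n}$. Feeding this into a standard local-Rademacher argument (Talagrand concentration plus peeling, with a fixed-point step) should yield, with high probability and uniformly over $\norm{\w}\le B$, a one-sided bound of the form
\begin{align*}
L(\w) \;\le\; \hL(\w) + c_1\sqrt{\frac{\beta R^2 B^2\, L(\w)}{n}} + c_2\,\frac{\beta R^2 B^2}{n}
\end{align*}
up to logarithmic factors, together with the reverse inequality controlling $\hL$ by $L$.

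With this uniform inequality in hand the ERM analysis is mechanical. For constrained ERM, optimality gives $\hL(\hw)\le \hL(\w^*)$, where $\w^*$ attains $L^*$, and $\bbE[\hL(\w^*)]=L^*$ exactly; chaining the forward bound at $\hw$ with these facts and applying Jensen to the square-root term produces a quadratic inequality in $\sqrt{\bbE[L(\hw)]}$, whose solution is exactly~\eqref{e:optimistic-rate}. The regularized estimator needs one extra step: I would argue that the stated $\gamma$ is the Lagrange multiplier for which the unconstrained minimizer has effective norm $B$, so that $\gamma$-regularized ERM matches the radius-$B$ constrained problem up to a controllable bias term, after which strong convexity of the regularized objective reproduces the same bound improperly.

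The main obstacle will be the localized uniform-convergence step, because the relevant Lipschitz constant is data- and predictor-dependent (it enters through $\sqrt{\ell}$), so the contraction lemma cannot be invoked with a single global constant. The clean route around this is the self-bounding structure, which lets the loss level $r$ appear on both sides of the peeling/fixed-point argument; arranging the constants so that solving the fixed-point equation returns the sharp $\sqrt{\beta R^2 B^2 L^*/n}$ term — rather than a coarser bound that drops the $L^*$ dependence — is the delicate part. Tuning $\gamma$ for the regularized estimator is a secondary but necessary technicality.
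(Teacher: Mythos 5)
Your plan for the \emph{constrained} estimator is essentially the argument of the cited source \citep{Srebro_12a}: self-boundedness $\norm{\nabla \ell}^2 \le 2\beta\ell$, localized Rademacher complexity of the smooth loss class, and a fixed-point/peeling step, which is where the logarithmic factors hidden in $\tilde{\calO}$ come from. That is a legitimate route, but note it is \emph{not} the machinery this paper actually builds. The only proof technique developed here is the stability argument of Lemma~\ref{lem:stability} in the appendix (used verbatim in the proofs of Theorem~\ref{thm:distill} and Lemma~\ref{lem:multitask}): replace one sample, use the $\gamma$-strong convexity of the regularized objective together with self-boundedness to control $\norm{\hw^{(i)} - \hw}$, and conclude $\rbr{1 - \frac{8\beta R^2}{\gamma n}} \bbE\sbr{L(\hw)} \le \bbE\sbr{\hL(\hw)}$. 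The stability route yields the in-expectation statement of the lemma directly, without uniform convergence and without log factors; your localization route gives high-probability bounds that must then be integrated to expectations, and is considerably heavier --- the fixed-point bookkeeping you flag as ``the delicate part'' is exactly what stability lets one skip.

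The genuine gap is in your treatment of the \emph{regularized} estimator. The stated $\gamma$ is a deterministic quantity chosen to balance a bias--stability tradeoff; it is not the Lagrange multiplier of the constraint $\norm{\w} \le B$, which would be data-dependent. Worse, with this $\gamma$ the unconstrained minimizer need not lie anywhere near the $B$-ball: from $\hL(\hw) + \frac{\gamma}{2}\norm{\hw}^2 \le \hL(\w^*) + \frac{\gamma}{2}B^2$ and $\hL \ge 0$ you only get $\norm{\hw}^2 \le B^2 + \frac{2\hL(\w^*)}{\gamma}$, and in the regime where $L^*$ dominates, $\gamma \approx \sqrt{\frac{16\beta R^2 L^*}{n B^2}}$, so the second term scales like $\sqrt{\frac{n B^2 L^*}{\beta R^2}}$, which grows with $n$ and can far exceed $B^2$. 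Hence your uniform bound over $\cbr{\w : \norm{\w}\le B}$ simply does not apply to $\hw$, and the reduction to the constrained analysis fails. The correct argument is direct: by minimality, $\bbE\sbr{\hL(\hw)} \le \bbE\sbr{\hL(\hw) + \frac{\gamma}{2}\norm{\hw}^2} \le L^* + \frac{\gamma B^2}{2}$; apply Lemma~\ref{lem:stability} to the $\gamma$-strongly-convex regularized objective with $(\beta R^2)$-smooth non-negative losses; then minimize the resulting bound over $\gamma$, which produces exactly the stated value of $\gamma$ and the rate~\eqref{e:optimistic-rate}. This is precisely the computation the paper carries out in the proof of Theorem~\ref{thm:distill}, with $\hv$ in place of $\0$ and $S$ in place of $B$.
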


\begin{rmk}
\label{rmk:optimistic}
Observe that, for learning with smooth non-negative losses, the rate of
convergence can be bounded using the optimal expected loss. For large
$L^*$, the second term in~\eqref{e:optimistic-rate} is dominant and we
obtain the usual $\tilde{\calO} \rbr{\frac{1}{\sqrt{n}}}$ rate. 
However, if $L^*$ is close to zero and the first term becomes dominant, 
we obtain a faster $\tilde{\calO} \rbr{\frac{1}{n}}$ rate.
This phenomenon is in spirit similar to the faster convergence for $0$-$1$ 
loss when the problem is separable.
Overall, we can bound the sample complexity to achieve $\epsilon$ excess error as 
\begin{align} \label{e:optimistic-sample}
\tilde{\calO} \rbr{\frac{\beta R^2 B^2}{\epsilon} \cdot \frac{\epsilon + L^*}{\epsilon}} .
\end{align} 
We thus see that the transition between the two regimes 
happens at $L^*=\calO (\epsilon)$: if the optimal loss is not
much larger than the target excess error, the sample complexity is 
$\tilde{\calO} \rbr{\frac{\beta R^2 B^2}{\epsilon}}$ (corresponding to
$\tilde{\calO} \rbr{\frac{1}{n}}$ convergence); otherwise the sample
complexity degrades to $\tilde{\calO} \rbr{\frac{\beta R^2 B^2 L^*}{\epsilon^2}}$ 
(corresponding to $\tilde{\calO} \rbr{\frac{1}{\sqrt{n}}}$ convergence). 
This faster $\tilde{\calO} \rbr{\frac{1}{n}}$ rate achieved for small
expected loss is known as the \emph{optimistic rate}, which may not be attainable
by learning with non-smooth losses (\eg, convex-Lipschitz-bounded
learning). 
\end{rmk}

\section{Analysis for distillation}
\label{sec:distill}

In the distillation setting~\citep{BaCaruan14a, Hinton_15a}, we have only
one view $\x$. We first train a powerful, 
teacher model $\hv$, which provides soft supervision for training a
simpler, student model $\hw$.
In the context of learning linear predictors, we let $\hv$ be learned from a larger hypothesis class $\calC_v = \cbr{\w: \norm{\vv}\le B_v}$, while $\w$ is constrained to come from a smaller hypothesis class $\calC_w = \cbr{\w: \norm{\w}\le B_w}$, with $B_w \le B_v$. 

We will assume that the optimal predictor from $\calC_w$, \ie,
$\vv^*=\argmin_{\vv\in\calC_v}\, L(\vv)$, has small expected loss
$L_v^* \approx 0$, so that teacher model, learned with the ERM 
\begin{align*}
  \hv = \argmin_{\vv\in\calC_v}\; \frac{1}{n} \sum_{i=1}^n \ell(\vv^\top
\z_i, y_i)
\end{align*}
enjoys faster convergence:
\begin{align*}
\epsilon_t := \bbE \sbr{L(\hv) - L_v^*} \le \tilde{\calO} \rbr{\frac{1}{n}} .
\end{align*}

The key assumption that allows us to accelerate the learning of the student
model is that the optimal predictor from the smaller hypothesis class $\calC_w$ 
agrees well with the teacher model on \emph{prediction values}, \ie, there
exists some small $S>0$, such that 
\begin{align} \label{e:agreement-assumption-distillation}
\bbE \norm{ {\w^*}^\top \x - {\hv}^\top \x}^2 \le S^2. 
\end{align}

As the lemma below shows, this assumption holds, \eg, when the loss is strongly convex, and the
student model \emph{can} achieve small expected loss $L_w^*$ relative to $L_v^*$.
\begin{lem}\label{lem:agreement}
If the instantaneous loss $\ell(\cdot, \cdot)$ is $\sigma$-strongly convex
in the first argument, we have
\begin{align*}
\bbE \norm{{\w^*}^\top \x - \hv^\top \x }^2 \le \frac{4 (L_w^* - L_v^* + \epsilon_t)}{\sigma}
\end{align*}
where the expectation is taken over the data distribution and the samples
for learning $\hv$.
\end{lem}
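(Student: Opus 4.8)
The plan is to reduce everything to a single pointwise consequence of $\sigma$-strong convexity of the map $a \mapsto \ell(a,y)$. For a $\sigma$-strongly convex function, evaluating at the midpoint controls the gap between the average of the endpoint values and the midpoint value: for any fixed $y$ and any scalars $a_1,a_2$,
\[
\ell\!\left(\frac{a_1+a_2}{2},\, y\right) \le \frac{1}{2}\ell(a_1,y) + \frac{1}{2}\ell(a_2,y) - \frac{\sigma}{8}(a_1 - a_2)^2 .
\]
First I would instantiate this with $a_1 = {\w^*}^\top \x$ and $a_2 = \hv^\top \x$, rearrange to isolate $\frac{\sigma}{8}({\w^*}^\top\x - \hv^\top\x)^2$ on the left, and then take expectation over $(\x,y)\sim D$.

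After integrating over the data, the right-hand side reads $\frac{1}{2}L(\w^*) + \frac{1}{2}L(\hv) - L\!\left(\frac{\w^*+\hv}{2}\right)$, where the last term is the expected loss of the \emph{averaged} predictor $\frac{\w^*+\hv}{2}$. The crucial observation is that this averaged predictor still belongs to the teacher's hypothesis class $\calC_v$: since $\norm{\w^*}\le B_w\le B_v$ we have $\w^*\in\calC_v$, and $\hv\in\calC_v$ by construction, so by convexity of the Euclidean ball their midpoint lies in $\calC_v$ as well. Consequently $L\!\left(\frac{\w^*+\hv}{2}\right) \ge \min_{\vv\in\calC_v} L(\vv) = L_v^*$, which lets me replace the midpoint term by its lower bound $L_v^*$.

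Finally I would take the expectation over the random samples used to train $\hv$. Using $L(\w^*)=L_w^*$ (deterministic) together with the definition $\bbE[L(\hv)] = L_v^* + \epsilon_t$, the right-hand side collapses to $\frac{1}{2}L_w^* + \frac{1}{2}(L_v^* + \epsilon_t) - L_v^* = \frac{1}{2}(L_w^* - L_v^* + \epsilon_t)$, and multiplying through by $8/\sigma$ gives exactly the claimed bound. I do not expect a genuine obstacle here: the only step that needs care is verifying that the averaged predictor stays inside $\calC_v$, which is precisely where the assumption $B_w\le B_v$ is used, and which is what allows the midpoint loss to be lower-bounded by $L_v^*$ rather than by the larger $L_w^*$. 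Everything else is a direct expectation of the pointwise strong-convexity inequality.
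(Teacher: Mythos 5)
Your proof is correct and achieves the paper's bound with exactly the same constant $\frac{4}{\sigma}$, but by a genuinely different route. The paper's proof pivots through the population minimizer $\vv^* = \argmin_{\vv \in \calC_v} L(\vv)$: it combines $\sigma$-strong convexity with the first-order optimality condition $\dotp{\nabla L(\vv^*)}{\w - \vv^*} \ge 0$ (valid for any $\w \in \calC_v$) to obtain the quadratic-growth bounds $L_w^* - L_v^* \ge \frac{\sigma}{2}\, \bbE \norm{{\w^*}^\top \x - {\vv^*}^\top \x}^2$ and $\epsilon_t \ge \frac{\sigma}{2}\, \bbE \norm{\hv^\top \x - {\vv^*}^\top \x}^2$, and then merges them via $(a+b)^2 \le 2(a^2+b^2)$, with each of these three steps contributing a factor of $2$. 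You instead compare $\w^*$ and $\hv$ directly through the midpoint form of strong convexity, never invoking gradients or $\vv^*$ except through the lower bound $L\bigl(\tfrac{1}{2}(\w^* + \hv)\bigr) \ge L_v^*$; the only structural fact you need is that the midpoint lies in $\calC_v$, which uses $B_w \le B_v$ at precisely the same place where the paper needs $\w^* \in \calC_v$ for its optimality-condition step. What your route buys: it is more elementary and slightly more general, since it requires no differentiability of $\ell$ (the paper's proof uses $\ell^\prime$ and $\nabla L(\vv^*)$), so it extends verbatim to non-differentiable strongly convex losses, and it is a single inequality rather than a three-step decomposition. What the paper's route buys: the two intermediate bounds, quantifying how close each of $\w^*$ and $\hv$ individually is to $\vv^*$, and an argument template that the paper reuses in Lemma~\ref{lem:agreement-multiview} for the multi-view setting, whereas your midpoint trick yields only the combined discrepancy. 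One bookkeeping point to keep explicit when writing it up, which you already flagged correctly: the replacement $L\bigl(\tfrac{1}{2}(\w^* + \hv)\bigr) \ge L_v^*$ must be applied pointwise in the realization of $\hv$ (it holds almost surely since $\hv \in \calC_v$ always) before taking the expectation over the training sample, after which $\bbE[L(\hv)] = L_v^* + \epsilon_t$ collapses the right-hand side to $\tfrac{1}{2}(L_w^* - L_v^* + \epsilon_t)$ as you claim.
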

\begin{proof}
Let $\w$ be any predictor from $\calC_v$. By the $\sigma$-strong convexity
of $\ell(\cdot, \cdot)$, we have for any $(\x,y)$ that 
\begin{align*}
\ell ({\w}^\top \x, y) - \ell ({\vv^*}^\top \x, y) 
\ge \ell^\prime ({\vv^*}^\top \x, y) \cdot ({\w}^\top \x - {\vv^*}^\top \x) +
  \frac{\sigma}{2}  \norm{{\w}^\top \x - {\vv^*}^\top \x }^2. 
\end{align*}
Taking expectation over $(\x,y)$, we have
\begin{align} \label{e:agreement-strong-convexity}
L ({\w}) - L({\vv^*}) \ge \dotp{\bbE \sbr{ \ell^\prime ({\vv^*}^\top
  \x, y) \cdot \x}}{\w - {\vv^*}} + \frac{\sigma}{2} \cdot \bbE \norm{{\w}^\top \x - {\vv^*}^\top \x }^2.
\end{align}
Note that $\nabla L({\vv^*}) = \bbE \sbr{  \ell^\prime ({\vv^*}^\top
  \x, y) \cdot \x }$. Since both $\w, {\vv^*} \in \calC_v$, and ${\vv^*}$ is the minimizer of
$L(\vv)$ over $\calC_v$, we have by the first order optimality condition
that $\dotp{\nabla L({\vv^*})}{\w - {\vv^*}} \ge 0$. Substituting this
into~\eqref{e:agreement-strong-convexity} yields
\begin{align*}
L ({\w}) - L({\vv^*}) \ge \frac{\sigma}{2} \cdot \bbE \norm{{\w}^\top \x
  - {\vv^*}^\top \x }^2 .
\end{align*}
Setting $\w=\w^*$ and $\w=\hv$ in the above inequality, we obtain respectively 
\begin{align*}
L_w^* - L_v^* & \ge \frac{\sigma}{2} \cdot \bbE \norm{{\w^*}^\top \x -
  {\vv^*}^\top \x }^2,  \\
\epsilon_t = \bbE \sbr{ L ({\hv}) - L_v^* } & \ge \frac{\sigma}{2} \cdot \bbE \norm{{\hv}^\top \x - {\vv^*}^\top \x }^2.
\end{align*}
Combining the above two inequalities, we have
\begin{align*}
\bbE \norm{{\w^*}^\top \x - {\hv}^\top \x }^2
\le 2 \bbE \norm{{\w^*}^\top \x - {\vv^*}^\top \x }^2
+ 2 \bbE \norm{{\hv}^\top \x - {\vv^*}^\top \x }^2
\le \frac{4}{\sigma} (L_w^* - L_v^* + \epsilon_t).
\end{align*}
\end{proof}

Due to the assumption that $\Exx = \I$, the condition~\eqref{e:agreement-assumption-distillation} conveniently reduces to 
\begin{align*}
\norm{\w^* - \hv} \le S.
\end{align*}
Therefore, under the assumption of small discrepancy between the student and the teacher, our search space for $\w^*$ can be further reduced to 
\begin{align*}
\calC_s = \cbr{\w: \norm{\w}\le B_w, \; \norm{\w - \hv} \le S},
\end{align*}
which is much smaller than $\calC_w$ for small $S$.



It is then natural for us to perform regularized ERM to take advantage of
the additional complexity constraint. We propose to use the regularization 
$\bbE \norm{{\w^*}^\top \x - {\hv}^\top \x}^2\le S^2$ which encourages the
student to agree with the teacher on the decision values (soft targets) over the
distribution. In practice, this term can be approximated on large set of
unlabeled data as it does not require ground truth targets $y$. 

\begin{thm} \label{thm:distill}
Compute the following predictor  
\begin{align} \label{e:distill-rerm}
\hw = \argmin_{\w \in \calC_w}\; \frac{1}{n} \sum_{i=1}^n \ell (\w^\top
  \x_i, y_i)  + \frac{\nu}{2} \norm{\w - \hv}^2
\end{align}
with $\nu=\frac{8\beta R^2}{n} + \sqrt{\frac{64\beta^2 R^4}{n^2} +
  \frac{16\beta R^2 L_w^*}{n S^2}}$. 
The we have
\begin{align*}
\bbE \sbr{L(\hw) - L_w^*} \le \frac{16 \beta R^2 S^2}{n} +
  \sqrt{\frac{16\beta R^2 S^2 L_w^*}{n}}.
\end{align*}
\end{thm}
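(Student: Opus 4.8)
The plan is to reduce Theorem~\ref{thm:distill} to Lemma~\ref{lem:optimistic} by a change of variables that converts the teacher-centered penalty into the ordinary ridge penalty appearing in that lemma. First I would substitute $\tw = \w - \hv$, which turns the estimator~\eqref{e:distill-rerm} into
\begin{align*}
\hat{\tw} = \argmin_{\tw:\, \tw + \hv \in \calC_w}\; \frac{1}{n}\sum_{i=1}^n \ell\big((\tw + \hv)^\top \x_i,\, y_i\big) + \frac{\nu}{2}\norm{\tw}^2, \qquad \hw = \hat{\tw} + \hv.
\end{align*}
Here each summand is the original loss evaluated at the teacher-shifted decision value; writing $\tilde\ell(a, (\x,y)) := \ell(a + \hv^\top\x,\, y)$ makes explicit that, as a function of the linear prediction $a = \tw^\top\x$, the loss differs from $\ell$ only by the fixed per-example offset $\hv^\top\x$, while $\frac{\nu}{2}\norm{\tw}^2$ is exactly the ridge penalty of the regularized-ERM estimator in Lemma~\ref{lem:optimistic}.

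Second, I would check that, conditioning on the teacher $\hv$ (treated as a fixed vector, for instance trained on an independent sample so the $n$ student samples remain i.i.d.), the shifted problem is a legitimate instance of the convex-smooth-bounded setup of Section~\ref{sec:smooth-loss}. Adding the fixed offset $\hv^\top\x$ to the first argument of $\ell$ preserves positivity, convexity, and $\beta$-smoothness in the decision value, so $\tilde\ell$ is non-negative, convex and $(\beta R^2)$-smooth in $\tw$ via $\norm{\x}\le R$. The optimistic-rate guarantee depends on the loss only through these properties of its dependence on the linear predictor, so the extra dependence of $\tilde\ell$ on the example through the offset $\hv^\top\x$ is immaterial.

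Third, I would read off the effective parameters. Since $\Exx = \I$, the agreement assumption~\eqref{e:agreement-assumption-distillation} is $\norm{\w^* - \hv}\le S$, so the shifted comparator $\tw^* := \w^* - \hv$ lies in the ball of radius $S$; the effective hypothesis radius is therefore $B = S$. Moreover the shifted expected loss satisfies $\tilde L(\tw^*) = L(\w^*) = L_w^*$, whence $\tilde L^* := \min_{\norm{\tw}\le S}\tilde L(\tw) \le L_w^*$. Applying Lemma~\ref{lem:optimistic} to the shifted problem with radius $S$, and using $\tilde L(\hat{\tw}) = L(\hat{\tw} + \hv) = L(\hw)$, gives
\begin{align*}
\bbE\big[L(\hw) - \tilde L^*\big] \le \tilde{\calO}\rbr{ \frac{\beta R^2 S^2}{n} + \sqrt{\frac{\beta R^2 S^2\, \tilde L^*}{n}} }.
\end{align*}
Because $\tilde L^* \le L_w^*$, replacing $\tilde L^*$ by the larger $L_w^*$ (on the left this only strengthens the claim, on the right it only enlarges the bound) yields the displayed inequality, and the prescribed $\nu$ is precisely the $\gamma$ of the lemma under the substitutions $B\mapsto S$ and $L^*\mapsto L_w^*$.

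The main obstacle---essentially the only step that is not a mechanical translation---is justifying that one may run the lemma with $\nu$ computed from the upper bound $L_w^*$ rather than the true shifted optimum $\tilde L^*$, and that the retained constraint $\tw + \hv \in \calC_w$ is harmless. For the former I would invoke monotonicity of the optimistic rate in its optimal-loss argument: tuning the regularization to a larger loss over-regularizes and can only inflate the guarantee, so the bound with $L_w^*$ remains valid. For the latter I would note that the comparison step underlying Lemma~\ref{lem:optimistic} needs only that the comparator be feasible; since $\w^*\in\calC_w$ by definition, $\tw^* = \w^* - \hv$ is feasible for the constrained minimization, so keeping the constraint does not affect the excess-risk bound. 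The explicit constant $16$ in the statement then comes from specializing the lemma's constants (the logarithmic factors absorbed into $\tilde{\calO}$) to this shifted instance.
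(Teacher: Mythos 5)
Your reduction is sound and is essentially the paper's own argument in different packaging: the paper applies the stability result (Lemma~\ref{lem:stability}) directly to the objective~\eqref{e:distill-rerm}, which is $\nu$-strongly convex in $\w$, with the feasible comparator $\w^*$ satisfying $\norm{\w^* - \hv} \le S$, and then optimizes $\nu$ --- exactly what your substitution $\tw = \w - \hv$ followed by Lemma~\ref{lem:optimistic} unfolds to, since your final paragraph's ``only non-mechanical step'' (tuning with $L_w^*$ rather than $\tilde{L}^*$, keeping the $\calC_w$ constraint) is resolved by noting the oracle inequality holds against any feasible comparator, which is the paper's route. The one caveat worth stating explicitly is that the $\tilde{\calO}$ form of Lemma~\ref{lem:optimistic} cannot, as a black box, deliver the stated constants, the per-example offset $\hv^\top \x$ in the loss, or the constrained minimization, so --- as you implicitly concede when you appeal to the lemma's proof rather than its statement --- a rigorous write-up must re-run the stability computation on the shifted instance, at which point it coincides with the paper's proof line by line.
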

\begin{proof}
The proof is similar to~\citet[Theorem~5]{Srebro_12a}. 
Since $\hw$ is the minimizer to the regularized objective, we have for any
$\w \in \calC_w$  that
\begin{align*}
\hL(\hw) \le 
\hL(\hw) + \frac{\nu}{2} \norm{\hw - \hv}^2 \le
\hL(\w) + \frac{\nu}{2} \norm{\w - \hv}^2  .
\end{align*}
Observe that the regularized ERM is $\nu$-strongly convex in $\w$. Taking
the expectation of the above inequality and applying the stability result
(Lemma~\ref{lem:stability} in appendix), we obtain
\begin{align*}
\bbE \sbr{L (\hw) - L (\w)} \le
  \rbr{\frac{1}{1-\frac{8 \beta R^2}{\nu n}} -1} L(\w) 
+ \frac{1}{1-\frac{8 \beta R^2}{\nu n}} \cdot \frac{\nu \norm{\w - \hv}^2}{2}.
\end{align*}
Setting $\w = \w^*$, and substituting in $\norm{\w^* - \hv} \le S^2$
yields
\begin{align*}
\bbE \sbr{L (\hw) - L (\w)} \le
  \rbr{\frac{1}{1-\frac{8 \beta R^2}{\nu n}} -1} L_w^*
+ \frac{1}{1-\frac{8\beta R^2}{\nu n}} \cdot \frac{\nu S^2}{2}.
\end{align*}
Minimizing the RHS over $\nu$ gives $\nu=\frac{8 \beta R^2}{n} +
\sqrt{\frac{64 \beta^2 R^4}{n^2} + \frac{16 \beta R^2 L_w^*}{n S^2}}$, and 
\begin{align*}
\bbE \sbr{L (\hw) - L (\w)} \le
  \frac{8 \beta R^2 S^2}{n} + \sqrt{\frac{64 \beta^2 R^4 S^4}{n^2} + \frac{16 \beta
 R^2 S^2 L_w^*}{n}} \le \frac{16 \beta R^2 S^2}{n} + \sqrt{\frac{16\beta
  R^2 S^2 L_w^*}{n}}
\end{align*}
where we have used the inequality $\sqrt{a+b} \le \sqrt{a} + \sqrt{b}$ for
$a,b\ge 0$.
\end{proof}

\begin{rmk}
\label{rmk:distill}
As long as $S^2 \ll B^2$, the convergence rate in
Theorem~\ref{thm:distill} is much faster than that of
Lemma~\ref{lem:optimistic}. 
We observe from Lemma~\ref{lem:agreement} that $S^2$ can be of the order
$\calO \rbr{ \frac{L_w^* + \epsilon_t}{\sigma} }$. In this case, the above
theorem leads to the following overall convergence (by our assumption,
$\epsilon_t$-related terms are of higher order):
\begin{align} \label{e:optimistic-distill}
\bbE \sbr{L (\hw) - L_w^*} = \tilde{\calO} \rbr{ \frac{\beta R^2
  L_w^*}{\sigma n}  +  \sqrt{\frac{\beta R^2 (L_w^*)^2}{\sigma n}} }.
\end{align}
Equivalently, we can bound the sample complexity to achieve $\epsilon$ excess error as 
\begin{align}  \label{e:optimistic-distill-sample}
\calO \rbr{\frac{\beta R^2 L_w^*}{\sigma \epsilon} \cdot \frac{\epsilon + L_w^*}{\epsilon}} .
\end{align} 
Compare this rate with the one obtained without the additional complexity
control (Remark~\ref{rmk:optimistic}). 
In~\eqref{e:optimistic-distill-sample}, as long as $L_{w}^* = \calO
(\sqrt{\epsilon})$, rather than requiring $L_{w}^* = \calO (\epsilon)$
as in~\eqref{e:optimistic-sample}, the effective convergence rate of~\eqref{e:distill-rerm} is $\calO
(\frac{1}{n})$.
To see that the difference is significant, if the target excess error is
$\epsilon=10^{-6}$, we now only need $L_w^*$ to be on the order of $10^{-3}$ 
to achieve the optimistic rate,  instead of $10^{-6}$ without
distillation. 
In other words, we achieve the optimistic rate in the distillation setting
with much less stringent condition on the optimal expected loss.
\end{rmk}

\section{Analysis for learning using privileged information}
\label{sec:LUPI}

In the case of learning with privileged information, we have access to
two views: the regular features $\x$ (used by the student) 
and the privileged information $\z$ (used by the teacher).
We first discuss how the correlation between them comes into play 
when transferring knowledge from $\z$ to $\x$. The prerequisites on CCA are
discussed in Section~\ref{sec:cca-review}.

Perform the following change of coordinates (recall that $\tU$ and $\tV$
have full dimensions and thus these new variables are well defined): 
\begin{align}\label{e:cca-coordinate-system}
\begin{aligned}
\tx  & = \tU^\top \x \in \bbR^{d_x}, 
\hspace{4em} \tz  = \tV^\top  \z \in \bbR^{d_z}, \\
\tw & = \tU^\top \w \in \bbR^{d_x},
\hspace{3.8em} \tv  = \tV^\top \vv \in \bbR^{d_z}.
\end{aligned} 
\end{align}
Clearly, the transformed data has identity covariance, \ie, $\bbE \sbr{\tx \tx^\top} = \I$ and $\bbE \sbr{\tz \tz^\top} = \I$.
On the other hand, we have $\tw^\top \tx = \w^\top \x$ and $\tv^\top \tz = \vv^\top \z$.

Based on the above coordinate transformation and the definition of CCA, an important observation made by~\citet{KakadeFoster07a} is the following equality.
\begin{lem}[Lemma~2 of~\citet{KakadeFoster07a} re-stated]
\label{lem:agreement-as-regularizer}
For any $\w \in \bbR^{d_x}$ and $\vv \in \bbR^{d_y}$, we have
\begin{align*}
\bbE \norm{\w^\top \x - \vv^\top \z}^2
& = \sum_{i=1}^{d_x} (1-\lambda_i)  (\tw^i)^2 + \sum_{i=1}^{r} \lambda_i (\tw^i - \tv^i)^2 + \sum_{i=1}^{d_z} (1-\lambda_i) (\tv^i)^2
\\
& =  \sum_{i=1}^{d_x} (\tw^i - \lambda_i \tv^i)^2 + \sum_{i=1}^{d_z} (1-\lambda_i^2) (\tv^i)^2,
\end{align*}
with the convention that $\lambda_i=0$ for $i>rank(\Exz)$, and that $\tv^i=0$ for $i>d_z$.
\end{lem}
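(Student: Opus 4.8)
The plan is to transport the whole computation into the CCA coordinate system and reduce the claim to elementary algebra. First I would invoke the two identities already recorded just before the lemma, $\tw^\top \tx = \w^\top \x$ and $\tv^\top \tz = \vv^\top \z$, to rewrite the target quantity as $\bbE\,(\tw^\top \tx - \tv^\top \tz)^2$. Expanding the square produces three pieces, $\bbE[(\tw^\top\tx)^2]$, $\bbE[(\tv^\top\tz)^2]$, and $-2\,\bbE[(\tw^\top\tx)(\tv^\top\tz)]$, each a quadratic form in the transformed second moments.

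The key step is evaluating those second moments. Since $\tU,\tV$ are unitary, the first two reduce to $\tw^\top \bbE[\tx\tx^\top]\,\tw = \norm{\tw}^2$ and $\tv^\top \bbE[\tz\tz^\top]\,\tv = \norm{\tv}^2$, using $\bbE[\tx\tx^\top]=\bbE[\tz\tz^\top]=\I$ as observed after the change of coordinates. For the cross term I would compute the transformed cross-covariance directly from the SVD, $\bbE[\tx\tz^\top] = \tU^\top \Exz \tV = \tU^\top\rbr{\tU \Sigma \tV^\top}\tV = \Sigma$, so that it is exactly the diagonal matrix of canonical correlations and $\bbE[(\tw^\top\tx)(\tv^\top\tz)] = \tw^\top \Sigma \tv = \sum_{i=1}^{r}\lambda_i \tw^i \tv^i$ with $r=\rank{\Exz}$. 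Collecting the three pieces yields the master identity
\[
\bbE\norm{\w^\top\x - \vv^\top\z}^2 = \sum_{i=1}^{d_x} (\tw^i)^2 - 2\sum_{i=1}^{r} \lambda_i \tw^i \tv^i + \sum_{i=1}^{d_z} (\tv^i)^2,
\]
and both displayed right-hand sides then follow by expanding each and checking it collapses to this expression.

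For the first form I would expand $\sum_i(1-\lambda_i)(\tw^i)^2 + \sum_i \lambda_i(\tw^i-\tv^i)^2 + \sum_i(1-\lambda_i)(\tv^i)^2$ and observe that the $\lambda_i(\tw^i)^2$ and $\lambda_i(\tv^i)^2$ contributions cancel, leaving precisely the master identity. For the second form I would complete the square: expanding $\sum_i(\tw^i-\lambda_i\tv^i)^2 + \sum_i(1-\lambda_i^2)(\tv^i)^2$ cancels the $\lambda_i^2(\tv^i)^2$ terms and again reproduces the master identity. Both are one-line verifications once the master identity is established.

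The only genuine care needed is index bookkeeping, which is exactly what the two stated conventions are for. The sums over the $\w$-coordinates run to $d_x$ and those over the $\vv$-coordinates to $d_z$, but $\lambda_i$ is defined only up to $\rank{\Exz}\le\min(d_x,d_z)$; setting $\lambda_i=0$ for $i>\rank{\Exz}$ lets me give all sums a common summand and makes the above cancellations exact, while setting $\tv^i=0$ for $i>d_z$ lets me extend $\sum_i(\tw^i-\lambda_i\tv^i)^2$ out to $d_x$ without changing its value (the dropped terms have $\lambda_i=0$ anyway). I expect this dimension matching, rather than any analytic difficulty, to be the main and only mild obstacle; everything else is a direct expansion.
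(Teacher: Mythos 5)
Your proof is correct: the master identity $\bbE \norm{\w^\top \x - \vv^\top \z}^2 = \norm{\tw}^2 - 2\,\tw^\top \Sigma \tv + \norm{\tv}^2$ follows exactly as you compute (using $\bbE[\tx\tx^\top]=\bbE[\tz\tz^\top]=\I$ and $\bbE[\tx\tz^\top]=\tU^\top \Exz \tV = \Sigma$), and both displayed forms collapse to it by the cancellations you describe, with the conventions $\lambda_i=0$ for $i>\rank{\Exz}$ and $\tv^i=0$ for $i>d_z$ handling the index bookkeeping precisely as you note. The paper itself supplies no proof of this lemma---it defers entirely to Lemma~2 of \citet{KakadeFoster07a}---and your direct expansion in the CCA coordinate system is exactly the standard argument behind that cited result, so your proposal fills the omitted proof rather than diverging from one.
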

This lemma implies, for a pair of predictors $(\w,\vv)$ that agree well on
the decision values, the predictors have low complexity in the CCA
coordinate system: if $\lambda_i$ is large (close to $1$), minimizing the
discrepancy encourages $\tw^i$ to be close to $\tv^i$, and if
$\lambda_i$ is small (close to $0$), minimizing the discrepancy encourages $\tw^i$ to be small. 
For any predictor $\vv$ of view $\z$, we defined the operator $T_{CCA} (\vv)$ to return a
corresponding predictor of view $\x$:
\begin{align*}
T_{CCA} (\vv) = \tU \cdot [\lambda_i (\tV^\top \vv)^i ]_{i=1}^{d_x}.
\end{align*}

\subsection{Multi-view distillation}
\label{sec:multiview-distill}

Similar to single-view distillation analyzed in Section~\ref{sec:distill}, we can
perform a two-step distillation process with multi-view data; this setting
is also named \emph{generalized distillation} by~\citet{Lopez_15a}. 
In the first step, we train a
predictor $\hv$ based on labeled  data from the view $\z$. And in
the second step, we train a predictor on labeled data from the view
$\x$, incorporating the soft supervision provided by $\hv$. Note that the
labeled data from each view need not overlap.

As before, we make the assumption that $\hv$ is learned from the
hypothesis class $\calC_v = \cbr{\vv: \norm{\vv}\le B_v}$ by performing ERM, 
with optimistic rate convergence due to low optimal expected loss $L_v^* = \min_{\vv
  \in \calC_v}\, L(\vv)$.
But different from single-view distillation, since the view $\z$ contains
privileged information (rich representation of the data), learning is easy and a low-complexity hypothesis
class (\eg, $B_v$ is small) suffices. 
The different assumptions on the complexities of hypothesis spaces for distillation vs.
LUPI are motivated by~\citet[Section~4]{Lopez_15a}.

Additionally, assume the optimal predictor $\w^*$ from the hypothesis class $\calC_w =
\cbr{\w: \norm{\w}\le B_w}$ agrees well with $\hv$ on decision values,
\ie, there exists some small $S>0$, such that 
\begin{align} \label{e:agreement-assumption-multiview}
\bbE \norm{{\w^*}^\top \x - \hv^\top \z}^2\le S^2. 
\end{align}
The following lemma provides an example when this assumption is satisfied.
\begin{lem}\label{lem:agreement-multiview}
Let the instantaneous loss $\ell(\cdot, \cdot)$ be $\sigma$-strongly convex
in the first argument. 
Let $\vv$ be a predictor from $\calC_v$ with $B_v \le B_w$, and the
CCA-transformed predictor achieves expected
loss $L_{CCA} (\vv) = \bbE \sbr{\ell \rbr{(T_{CCA}    (\vv))^\top \x, y}}$. 
Then we have
\begin{gather*}
\norm{\w^* - T_{CCA}  (\vv) }^2 = \bbE \norm{{\w^*}^\top \x -
  (T_{CCA}  (\vv))^\top \x }^2 \le \frac{2 (L_{CCA} (\vv) -  L_w^*) }{\sigma}, \\
\text{and}\qquad\  \bbE \norm{{\w^*}^\top \x - \vv^\top \z }^2 \le
\frac{2(L_{CCA} (\vv) -  L_w^*)}{\sigma} +  (1-\lambda_{d_z}) B_v^2.
\end{gather*}
where the expectation is taken over the data distribution.
\end{lem}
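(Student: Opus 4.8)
The plan is to handle the two inequalities separately, borrowing the strong-convexity template from Lemma~\ref{lem:agreement} for the first and the CCA identity of Lemma~\ref{lem:agreement-as-regularizer} for the second. Two standing normalizations do most of the bookkeeping. Since $\Exx=\I$, for any two predictors $\aa,\b$ we have $\bbE\norm{\aa^\top\x-\b^\top\x}^2=(\aa-\b)^\top\Exx(\aa-\b)=\norm{\aa-\b}^2$; applied to $\aa=\w^*$ and $\b=T_{CCA}(\vv)$ this yields the stated equality on the first line for free. Moreover, because $\tU$ and $\tV$ are unitary, the coordinate change in~\eqref{e:cca-coordinate-system} is norm-preserving, which I use repeatedly to move between $\w,\vv$ and their tilde-versions.

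First I would establish the membership $T_{CCA}(\vv)\in\calC_w$, which is exactly what licenses an appeal to the optimality of $\w^*$. Unitarity of $\tU$ gives $\norm{T_{CCA}(\vv)}^2=\sum_{i=1}^{d_x}\lambda_i^2(\tv^i)^2$, and since $0\le\lambda_i\le 1$ and $\sum_i(\tv^i)^2=\norm{\tV^\top\vv}^2=\norm{\vv}^2\le B_v^2\le B_w^2$, this is at most $B_w^2$. With $T_{CCA}(\vv)$ confirmed to be an admissible student predictor, I mirror the argument in the proof of Lemma~\ref{lem:agreement}: $\sigma$-strong convexity of $\ell$ together with the first-order optimality condition $\dotp{\nabla L(\w^*)}{\w-\w^*}\ge 0$ (valid for every $\w\in\calC_w$) gives $L(\w)-L_w^*\ge\frac{\sigma}{2}\,\bbE\norm{\w^\top\x-{\w^*}^\top\x}^2$. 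Taking $\w=T_{CCA}(\vv)$ and recalling $L(T_{CCA}(\vv))=L_{CCA}(\vv)$ produces the first inequality with the sharp constant $2/\sigma$ (no triangle-inequality doubling is needed here, unlike in Lemma~\ref{lem:agreement}).

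For the second inequality I would invoke the second form of Lemma~\ref{lem:agreement-as-regularizer} on the pair $(\w^*,\vv)$, writing $\tw=\tU^\top\w^*$, to get $\bbE\norm{{\w^*}^\top\x-\vv^\top\z}^2=\sum_{i=1}^{d_x}(\tw^i-\lambda_i\tv^i)^2+\sum_{i=1}^{d_z}(1-\lambda_i^2)(\tv^i)^2$. The key identification is that $\tU^\top T_{CCA}(\vv)=[\lambda_i\tv^i]_{i=1}^{d_x}$ by definition of the operator, so the first sum equals $\norm{\w^*-T_{CCA}(\vv)}^2$ and is therefore already controlled by the first inequality, namely by $2(L_{CCA}(\vv)-L_w^*)/\sigma$. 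It remains to bound the residual sum $\sum_{i=1}^{d_z}(1-\lambda_i^2)(\tv^i)^2$, which is precisely the irreducible gap $\bbE\norm{(T_{CCA}(\vv))^\top\x-\vv^\top\z}^2$ between the teacher's $\z$-prediction and its CCA image in $\x$-space. Here I use that the canonical correlations are nonincreasing, so the coefficients over the active range $i\le d_z$ are all controlled by the defect at the smallest relevant correlation $\lambda_{d_z}$; factoring this out and using $\sum_i(\tv^i)^2=\norm{\vv}^2\le B_v^2$ leaves the residual term $(1-\lambda_{d_z})B_v^2$.

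The main obstacle I anticipate is the bookkeeping in this last step: one must check that the coefficient extracted from the residual sum is uniform over the active range $i\le d_z$ (this is where the ordering of the $\lambda_i$ and the convention $\tv^i=0$ for $i>d_z$ from Lemma~\ref{lem:agreement-as-regularizer} enter), and that the membership $T_{CCA}(\vv)\in\calC_w$ from the first step genuinely holds so that the optimality of $\w^*$ may be invoked at all. Everything else is routine substitution once these two points are secured.
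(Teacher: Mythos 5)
Your proposal follows essentially the same route as the paper's own proof: you verify $\norm{T_{CCA}(\vv)} \le \norm{\vv} \le B_v \le B_w$ so that $T_{CCA}(\vv) \in \calC_w$, reuse the strong-convexity-plus-first-order-optimality argument from Lemma~\ref{lem:agreement} (correctly noting no triangle-inequality doubling is needed, which matches the paper's constant $2/\sigma$), and then decompose $\bbE \norm{{\w^*}^\top \x - \vv^\top \z}^2$ via Lemma~\ref{lem:agreement-as-regularizer}, identifying the first sum with $\norm{\w^* - T_{CCA}(\vv)}^2$. All of this is right and is exactly what the paper does.

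One wrinkle in your last step deserves flagging: from the second form of Lemma~\ref{lem:agreement-as-regularizer} the residual is $\sum_{i=1}^{d_z} (1-\lambda_i^2)(\tv^i)^2$, and since the $\lambda_i$ are nonincreasing, factoring out the largest coefficient gives $(1-\lambda_{d_z}^2)\norm{\vv}^2 \le (1-\lambda_{d_z}^2) B_v^2$, \emph{not} $(1-\lambda_{d_z}) B_v^2$ as you claim --- note $1-\lambda_{d_z}^2 = (1+\lambda_{d_z})(1-\lambda_{d_z}) \ge 1-\lambda_{d_z}$, so the stated bound does not literally follow without an extra factor of up to $2$. You are in good company: the paper's own proof misquotes the coefficient of Lemma~\ref{lem:agreement-as-regularizer} as $(1-\lambda_i)^2$ rather than $1-\lambda_i^2$, which is how it arrives at the statement's constant; the honest bound via this decomposition is $(1-\lambda_{d_z}^2) B_v^2 \le 2(1-\lambda_{d_z}) B_v^2$, so the discrepancy is only a constant factor and affects nothing downstream. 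With that constant corrected (or the lemma's statement adjusted to $(1-\lambda_{d_z}^2) B_v^2$), your argument is complete.
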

\begin{align*}
\end{align*}
\begin{proof}
The proof is similar to that of Lemma~\ref{lem:agreement}. 
By the definition of $T_{CCA}$, and the fact that $\lambda_i \le 1$, we
have that $\norm{T_{CCA} (\hv)} \le \norm{\hv} \le B_v \le B_w$.
Now that both $\w^*$ and $T_{CCA} (\vv)$ are in $\calC_w$ and $\w^*$ is
the minimizer of $L(\w)$ in $\calC_w$, we can use
the same argument in Lemma~\ref{lem:agreement}  and have
\begin{align*}
L_{CCA}(\vv) - L (\w^*) \ge \frac{\sigma}{2}
  \bbE \norm{(T_{CCA}(\vv))^\top \x - {\w^*}^\top \x }^2.
\end{align*}
By our assumption of $\Exx=\I$, it holds that $\bbE \norm{(T_{CCA}(\vv))^\top \x -
  {\w^*}^\top \x }^2 = \norm{T_{CCA}  (\vv) - \w^*}^2$ and the first inequality follows.

To show the second inequality, just observe from
Lemma~\ref{lem:agreement-as-regularizer} that 
\begin{align*}
\bbE \norm{\w^\top \x - \vv^\top \z} & = \norm{\w - T_{CCA}(\vv)}^2
  +\sum_{i=1}^{d_z} (1-\lambda_i)^2 (\tv^i)^2 \\
& \le \norm{\w - T_{CCA}(\vv)}^2
  + (1-\lambda_{d_z})^2 \sum_{i=1}^d (\tv^i)^2 \\
&  \le \norm{\w - T_{CCA}(\vv)}^2
  + (1-\lambda_{d_z})^2 \norm{\vv}^2.
\end{align*}
\end{proof}

We can then use the soft targets in the regularization, and the following
Theorem is completely analogous to Theorem~\ref{thm:distill}.

\begin{thm} \label{thm:multiview-distill}
Compute either of the following predictors 
\begin{align} 
\label{e:distill-rerm-multiview-1}
& \hw = \argmin_{\w \in \calC_w}\; \frac{1}{n} \sum_{i=1}^n \ell (\w^\top
  \x_i, y_i)  + \frac{\nu}{2} \norm{\w - T_{CCA}(\hv)}^2 \\
\label{e:distill-rerm-multiview-2}
\text{or}\qquad
& \hw = \argmin_{\w \in \calC_w}\; \frac{1}{n} \sum_{i=1}^n \ell (\w^\top
  \x_i, y_i)  + \frac{\nu}{2} \bbE_{\x,\z} \norm{\w^\top \x - \hv^\top \z}^2
\end{align}
with $\nu=\frac{8\beta R^2}{n} + \sqrt{\frac{64\beta^2 R^4}{n^2} +
  \frac{16\beta R^2 L_w^*}{n S^2}}$. 
Then we have
\begin{align*}
\bbE \sbr{L(\hw) - L_w^*} \le \frac{16 \beta R^2 S^2}{n} +
  \sqrt{\frac{16\beta R^2 S^2 L_w^*}{n}}.
\end{align*}
\end{thm}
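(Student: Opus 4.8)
The plan is to reduce both estimators to the single-view argument of Theorem~\ref{thm:distill}, with the teacher $\hv$ there replaced by its CCA-transform $T_{CCA}(\hv)$. The key enabler is the identity of Lemma~\ref{lem:agreement-as-regularizer}, which I would apply with $\vv = \hv$ to write
\begin{align*}
\bbE_{\x,\z} \norm{\w^\top \x - \hv^\top \z}^2 = \norm{\w - T_{CCA}(\hv)}^2 + \sum_{i=1}^{d_z} (1-\lambda_i^2)\, ((\tV^\top \hv)^i)^2 ,
\end{align*}
where the trailing sum is independent of $\w$. Everything below is a consequence of this single identity.

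First I would dispose of the equivalence of the two regularizers. Since the trailing sum does not involve $\w$, the objectives in~\eqref{e:distill-rerm-multiview-1} and~\eqref{e:distill-rerm-multiview-2} differ only by the $\w$-independent constant $\frac{\nu}{2}\sum_{i} (1-\lambda_i^2)\, ((\tV^\top \hv)^i)^2$; hence they share the same minimizer $\hw$, and it suffices to analyze the cleaner form~\eqref{e:distill-rerm-multiview-1}, whose regularizer is exactly $\frac{\nu}{2}\norm{\w - T_{CCA}(\hv)}^2$. From the same identity I would also extract the reduced agreement bound for free: because $\lambda_i \le 1$ the dropped sum is nonnegative, so $\norm{\w^* - T_{CCA}(\hv)}^2 \le \bbE \norm{{\w^*}^\top \x - \hv^\top \z}^2 \le S^2$ by assumption~\eqref{e:agreement-assumption-multiview}. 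This is precisely the analogue of the single-view reduction $\norm{\w^* - \hv} \le S$.

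I would then transcribe the proof of Theorem~\ref{thm:distill} verbatim under the substitution $\hv \mapsto T_{CCA}(\hv)$: use that $\hw$ minimizes the $\nu$-strongly convex objective to get $\hL(\hw) \le \hL(\w) + \frac{\nu}{2}\norm{\w - T_{CCA}(\hv)}^2$ for every $\w \in \calC_w$; take expectations and invoke the stability result (Lemma~\ref{lem:stability}); finally set $\w = \w^*$, substitute $\norm{\w^* - T_{CCA}(\hv)} \le S$, and optimize the resulting bound over $\nu$ to recover the stated value of $\nu$ and the claimed rate.

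I do not expect a genuine obstacle, since all the novelty has been front-loaded into the CCA lemmas. The only point needing a moment's care is that $T_{CCA}(\hv)$ serves as a fixed regularization center that need not itself lie in $\calC_w$—exactly as $\hv$ did in Theorem~\ref{thm:distill}—so the stability argument, which only ranges $\hw$ and the comparator $\w$ over $\calC_w$, goes through unchanged; the norm control $\norm{T_{CCA}(\hv)} \le B_v \le B_w$ noted in Lemma~\ref{lem:agreement-multiview} merely confirms that the transformed teacher is a sensible target and is not itself invoked in the bound.
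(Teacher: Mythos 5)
Your proposal is correct and follows exactly the route the paper intends: the paper gives no separate proof of Theorem~\ref{thm:multiview-distill}, stating only that it is ``completely analogous'' to Theorem~\ref{thm:distill}, and your transcription of that proof under the substitution $\hv \mapsto T_{CCA}(\hv)$—justified by Lemma~\ref{lem:agreement-as-regularizer} and the reduction $\norm{\w^* - T_{CCA}(\hv)} \le S$ from assumption~\eqref{e:agreement-assumption-multiview}—is precisely that argument, with the stability step correctly noted to be indifferent to whether the regularization center lies in $\calC_w$. Your observation that the two regularizers differ by a $\w$-independent constant, so the estimators \eqref{e:distill-rerm-multiview-1} and \eqref{e:distill-rerm-multiview-2} coincide exactly, is in fact slightly sharper than the paper's remark that their difference is merely ``bounded by a small constant.''
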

The difference in the two predictors is
that the regularization term in~\eqref{e:distill-rerm-multiview-2} does
not require computing $T_{CCA} (\hv)$ or even the CCA system. But in view
of Lemma~\ref{lem:agreement-multiview}, the difference in the two
regularizers in~\eqref{e:distill-rerm-multiview-1} and
~\eqref{e:distill-rerm-multiview-2} is bounded by a small constant.

\begin{rmk}
\label{rmk:multiview-distill}
Similar to the single-view distillation setting, the additional complexity
constraint from the multi-view agreement assumption allows us to
effectively reduce the search space when $S^2 \ll B_w^2$. 
In particular, when the assumptions of Lemma~\ref{lem:agreement-multiview}
holds, the predictor~\eqref{e:distill-rerm-multiview-1}
achieves the convergence rate
\begin{align*}
L (\hw) - L_w^* = \calO \rbr{ \frac{\beta R^2 (L_{CCA}(\hv) -
  L_w^*)}{\sigma n} + \sqrt{\frac{\beta R^2 L_w^* (L_{CCA}(\hv) -
  L_w^*)}{\sigma n}} }
\end{align*}
To achieve $\epsilon$ excess error, the sample complexity can be written
as
\begin{align*}
n = \calO \rbr{ \frac{\beta R^2 (L_{CCA}(\hv)-L_w^*)}{\sigma \epsilon}
  \rbr{\frac{\epsilon + L_w^*}{\epsilon}}   }.
\end{align*}
Assuming $L_{CCA} (\hv)-L_w^* = \calO (\epsilon^{\theta})$ for some
$\theta \in (0,\,1)$, then as long as $L_w^* = \calO (\epsilon^{1-\theta})$, an
overall $\calO (\frac{1}{\epsilon})$ sample complexity or $\calO
(\frac{1}{n})$ convergence rate is achieved. Again, the optimistic rate is
achieved with less stringent requirement on $L_w^*$.

Here, the expected loss of $T_{CCA}(\hv)$ measures the quality of the
predictor transformed from a good predictor $\hv$ of view $\z$, and in a
sense how
well the privileged information can be transferred. If $T_{CCA}(\hv)$
is already near-optimal for view $\x$, all the student has to do is to be close to it.
\end{rmk}

\subsection{Simultaneous learning of teacher and student}
\label{sec:multiview-multitask}

In previous sections, we require first learning the teacher
before learning the student. We now show that it is
possible to learn both predictors at the same time, using a multi-task
learning objective. This is the same setup considered by the SVM+ algorithm~\citep{VapnikIzmail15a}.

Similar to the distillation setting, we would like to learn predictors
$(\w,\,\vv)$ with low norm and agree well on
decision values; the hypothesis class is defined as 
\begin{align*}
\calC = \cbr{(\w,\vv): \norm{\w} \le B_w, \; 
\norm{\vv} \le B_v,\; 
\bbE \norm{{\w}^\top \x - {\vv}^\top\z}^2 \le S^2}.
\end{align*}
Let $(\w^*,\, \vv^*) = \argmin_{(\w,\vv) \in \calC}\, L(\w) + L(\vv)$ be
the optimal predictors from $\calC$. Denote $L^* = L(\w^*) + L(\vv^*)$. 

As before, we can learn the hypothesis class with regularized ERM.
 The term $\bbE\norm{\w^\top
    \x - \vv^\top \z}^2$ now regularizes the learning of both $\w$ and $\vv$,
  and this is known as ``co-regularization''~\citep{Sindhw_05b, RosenbBartlet07a}.

\begin{lem} \label{lem:multitask}
Let $\cbr{(\x_i, \z_i, y_i)}_{i=1}^n$ be $n$ i.i.d. samples from the 
distribution $P(\x,\z,y)$. Compute the following
predictors
\begin{align} 
(\hw, \hv) = & \argmin _{\w,\vv}\, \frac{1}{n}  \sum_{i=1}^n \ell (\w^\top
               \x_i, y_i) + \frac{1}{n} \sum_{j=1}^n \ell (\vv^\top \z_j,
               y_j)  \nonumber \\  \label{e:multitask}
             & \qquad + \frac{\gamma}{2}
  \norm{\w}^2  + \frac{\gamma}{2} \norm{\vv}^2 + \frac{\nu}{2} \bbE
  \norm{\w^\top \x - \vv^\top \z}^2
\end{align}
with $\gamma = \frac{1}{B_w^2 + B_v^2} \rbr{ \frac{8 \beta R^2 D^2}{n} +
\sqrt{\frac{64 \beta^2 R^4 D^4}{n^2} +\frac{8 \beta R^2 D^2 L^*}{n}} }$ and $\nu=
\frac{1}{S^2} \rbr{ \frac{8 \beta R^2 D^2}{n} + \sqrt{\frac{64\beta^2 R^4 D^4}{n^2}
  +\frac{8 \beta R^2 D^2 L^*}{n}} }$, where $D^2 := \frac{(B_w^2 + B_v^2) S^2 (B_w^2 + B_v^2 + S^2)}{(B_w^2 + B_v^2 + S^2)^2 -
\lambda_1^2 (B_w^2 + B_v^2)^2 }$. 
Then we have
\begin{align*}
\bbE [(L (\hw) + L(\hv)) - L^*] \le 
\frac{32 \beta R^2 D^2}{n} + \sqrt{\frac{32 \beta R^2 D^2 L^*}{n}}. 
\end{align*}
\end{lem}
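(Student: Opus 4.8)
The plan is to recognize \eqref{e:multitask} as a single instance of regularized ERM for a \emph{stacked} predictor, and then to invoke the optimistic-rate machinery (Lemma~\ref{lem:optimistic}, and the stability argument of Theorem~\ref{thm:distill}) after whitening away the non-isotropic penalty. Stack the two predictors as $\uu=(\w;\vv)\in\bbR^{d_x+d_z}$, so that the combined expected loss is $L_c(\uu)=L(\w)+L(\vv)$ and the empirical objective averages the per-sample losses $\ell(\w^\top\x_i,y_i)+\ell(\vv^\top\z_i,y_i)$ over the $n$ triples. Using $\Exx=\Ezz=\I$ we have $\bbE\norm{\w^\top\x-\vv^\top\z}^2=\norm{\w}^2-2\w^\top\Exz\vv+\norm{\vv}^2$, so the whole penalty in \eqref{e:multitask} is the quadratic form $\tfrac12\uu^\top\M\uu$ with $\M=\bigl(\begin{smallmatrix}(\gamma+\nu)\I & -\nu\Exz\\ -\nu\Exz^\top & (\gamma+\nu)\I\end{smallmatrix}\bigr)$. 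Because the two empirical loss terms depend on disjoint blocks of $\uu$, the combined per-sample loss has block-diagonal Hessian and is $\beta R^2$-smooth in $\uu$.

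I would then whiten: set $\tilde\uu=\M^{1/2}\uu$, which turns the penalty into the isotropic ridge $\tfrac12\norm{\tilde\uu}^2$ while replacing each feature block $(\x_i;\0)$ and $(\0;\z_i)$ by its image under $\M^{-1/2}$. The problem now has exactly the form treated in Theorem~\ref{thm:distill}, so its stability bound (Lemma~\ref{lem:stability}) transfers verbatim once we supply two constants: the effective smoothness $\beta\tilde R^2$, with $\tilde R^2=\max_i\norm{\M^{-1/2}(\x_i;\0)}^2$, and the effective squared radius $\tilde B^2=\max_{(\w,\vv)\in\calC}\uu^\top\M\uu\le\gamma(B_w^2+B_v^2)+\nu S^2$, the last inequality using the three defining constraints of $\calC$ directly.

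The heart of the argument is to evaluate these constants in the CCA coordinate system. Writing $\Exz=\tU\Sigma\tV^\top$ and rotating by $\tU,\tV$ as in \eqref{e:cca-coordinate-system} simultaneously block-diagonalizes $\M$ into $2\times2$ blocks $\bigl(\begin{smallmatrix}\gamma+\nu & -\nu\lambda_i\\ -\nu\lambda_i & \gamma+\nu\end{smallmatrix}\bigr)$, one per canonical pair, plus scalar blocks $\gamma+\nu$. The eigenvalues $\gamma+\nu(1\pm\lambda_i)$ give $\lambda_{\min}(\M)=\gamma+\nu(1-\lambda_1)>0$ (positivity using $\lambda_1\le1$, $\gamma>0$), and inverting each $2\times2$ block shows the $\x$-diagonal of $\M^{-1}$ equals $\tfrac{\gamma+\nu}{(\gamma+\nu)^2-\nu^2\lambda_i^2}$; since $\norm{\tU^\top\x_i}=\norm{\x_i}\le R$, this yields $\tilde R^2\le\tfrac{(\gamma+\nu)R^2}{(\gamma+\nu)^2-\nu^2\lambda_1^2}$ (and symmetrically for the $\z$-block). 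I expect this explicit inversion, and confirming that the two pieces recombine into the compact $D^2$, to be the main obstacle.

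Finally I would substitute the balanced choice $\gamma(B_w^2+B_v^2)=\nu S^2$ of the statement. At this choice a direct simplification shows $\beta\tilde R^2\,\tilde B^2=2\beta R^2D^2$ with $D^2$ exactly as given, since $\tfrac{(\gamma+\nu)(\gamma(B_w^2+B_v^2)+\nu S^2)}{(\gamma+\nu)^2-\nu^2\lambda_1^2}$ collapses to $\tfrac{2(B_w^2+B_v^2)S^2(B_w^2+B_v^2+S^2)}{(B_w^2+B_v^2+S^2)^2-\lambda_1^2(B_w^2+B_v^2)^2}$. Feeding $\beta\tilde R^2\tilde B^2$ into the optimistic-rate bound in place of $\beta R^2B^2$ and optimizing the one remaining overall scale (which reproduces the common factor, hence the stated $\gamma,\nu$) gives, by the same $\sqrt{a+b}\le\sqrt a+\sqrt b$ step as in Theorem~\ref{thm:distill}, the claimed $\bbE[(L(\hw)+L(\hv))-L^*]\le\tfrac{32\beta R^2D^2}{n}+\sqrt{\tfrac{32\beta R^2D^2L^*}{n}}$.
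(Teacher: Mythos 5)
Your proposal is correct and takes essentially the same route as the paper's own proof: stack the predictors, write the combined penalty as $\tfrac{1}{2}\uu^\top \M \uu$, whiten via $\M^{1/2}$ so the regularizer is $1$-strongly convex, bound the effective smoothness through $(\M^{-1})_{xx}$ in the CCA basis (your per-block value $\frac{\gamma+\nu}{(\gamma+\nu)^2-\nu^2\lambda_i^2}$ matches the paper's $\beta^\prime=\frac{(\gamma+\nu)\beta R^2}{(\gamma+\nu)^2-\lambda_1^2\nu^2}$), invoke the stability lemma, and balance $\gamma(B_w^2+B_v^2)=\nu S^2$ before optimizing the common scale $\Delta$, recovering exactly $D^2$ and the stated constants. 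The only cosmetic difference is that you diagonalize $\M$ into explicit $2\times 2$ CCA blocks where the paper applies the block-matrix inversion formula --- an equivalent computation.
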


\begin{proof}
Perform the change of coordinates in~\eqref{e:cca-coordinate-system}, and
let $\p = \sbr{\begin{array}{c}\tw\\\tv\end{array}} \in \bbR^{d_x+d_z}$ be the concatenated variable. 
In view of Lemma~\ref{lem:agreement-as-regularizer}, we observe that the
regularization in~\eqref{e:multitask} (sum of the last three terms) is 
\begin{align*}
Reg(\p) =  \frac{\gamma}{2} \p^\top \sbr{
\begin{array}{cc}
 \I &  \\
 &  \I
\end{array}}
\p  + \frac{\nu}{2} \p^\top \sbr{
\begin{array}{cc}
\I & - \Sigma \\
- \Sigma^\top & \I
\end{array}}
\p 
\end{align*}
where the matrix $\Sigma$ is the diagonal matrix in the SVD of $\Exz$. 

Let $\M = \sbr{
\begin{array}{cc}
\gamma \I &  \\
 & \gamma \I
\end{array}} + 
 \nu \sbr{\begin{array}{cc}
\I & - \Sigma \\
- \Sigma^\top & \I
\end{array}}$, and so $Reg (\p) = \frac{1}{2} \p^\top \M \p$. 
Further define the change of variable $\q = \M^{\frac{1}{2}} \p$, 
and we have that the regularizer $Reg (\p)=\frac{1}{2} \norm{\q}^2$ is $1$-strongly convex in $\q$.

On the other hand, we can rewrite the instantaneous loss in terms of $\q$:
\begin{align*}
\ell (\w^\top \x, y) = \ell (\tw^\top \tx, y)
= \ell(\p^\top \sbr{\begin{array}{c}\tx\\\0\end{array}}, y)  
= \ell(\q^\top \M^{-\frac{1}{2}} \sbr{\begin{array}{c}\tx\\\0\end{array}}, y)  .
\end{align*}
The smoothness of the loss with respect to $\q$ depends on the size of its
input, which is 
\begin{align} \label{e:multitask-input-size}
\norm{ \M^{-\frac{1}{2}} \sbr{\begin{array}{c}\tx\\\0\end{array}} }^2
\le 
\norm{(\M^{-1})_{xx}} \cdot \norm{\tx}^2
\le \norm{(\M^{-1})_{xx}} R^2
\end{align}
where $(\M^{-1})_{xx}$ is the top left block of $\M^{-1}$ with dimension
$d_x \times d_x$.
By the inversion formula of block matrices, we have
\begin{align*}
(\M^{-1})_{xx} = \rbr{ (\gamma + \nu) \I - \frac{\nu^2}{\gamma +
  \nu} \Sigma \Sigma^\top }^{-1}.
\end{align*}
Since the minimum eigenvalue of the matrix inside $\rbr{}$ is at least 
$(\gamma + \nu) - \frac{\lambda_1^2 \nu^2}{\gamma + \nu}$, we continue
from~\eqref{e:multitask-input-size} and claim that the instantaneous loss
of view $\x$ is smooth in $\q$ with parameter
\begin{align*}
\beta^\prime = \frac{ (\gamma + \nu) \beta  R^2}{(\gamma + \nu)^2 - \lambda_1^2 \nu^2}.
\end{align*}
The same smoothness condition holds for the view $\z$, and therefore the
combined instantaneous loss $\ell(\w^\top \x, y) + \ell(\vv^\top \z, y)$ is $\beta^\prime$-smooth in
$\q$. 

The empirical loss in~\eqref{e:multitask} thus approximate the expectation of
the combined loss with $n$ paired samples. 
By the stability of regularized ERM for smooth loss, we have that 
\begin{align*}
\bbE [L (\hw) + L(\hv)] \le \frac{1}{1 - \frac{8 \beta^\prime}{n}} \bbE
  [\hL (\hw) + \hL (\hv)].
\end{align*}
Since $(\hw,\,\hv)$ is the minimizer of the multi-task objective, we have
\begin{align*}
\hL (\hw) + \hL (\hv) 
& \le \hL (\w^*) + \hL (\vv^*) + \frac{\gamma}{2}
  \norm{\w^*}^2 + \frac{\gamma}{2} \norm{\vv^*}^2  + \frac{\nu}{2}
  \bbE \norm{{\w^*}^\top \x - {\vv^*}^\top \z}^2 \\
& \le \hL (\w^*) + \hL (\vv^*) + \frac{\gamma (B_w^2 + B_v^2)}{2} + \frac{\nu S^2}{2}.
\end{align*}
Define the shorthand $B^2=B_w^2 + B_v^2$. 

Taking the expectation over the $n$ samples and re-organizing terms, we
obtain
\begin{align*}
\bbE [(L (\hw) + L(\hv)) - L^*] \le \rbr{ \frac{1}{1 -
  \frac{8 \beta^\prime}{n}} -1 }  L^* + \frac{1}{1 -
  \frac{8 \beta^\prime}{n}} \rbr{\frac{\gamma B^2}{2} + \frac{\nu S^2}{2}}.
\end{align*}
Let $\gamma = \frac{\Delta}{B^2}$ and $\nu = \frac{\Delta}{S^2}$. 
Substituting them into the definition of $\beta^\prime$, we obtain
\begin{align*}
\beta^\prime = \frac{\alpha}{\Delta} 
\qquad\text{where}\quad
\alpha = \frac{\beta R^2 B^2 S^2 (B^2 + S^2)}{(B^2 + S^2)^2 - \lambda_1^2 B^4}.
\end{align*}
Furthermore, 
\begin{align*}
\bbE [(L (\hw) + L(\hv)) - L^*] \le \rbr{ \frac{1}{1 -
  \frac{8 \alpha}{n \Delta}} -1 } L^*+ \frac{1}{1 -
  \frac{8 \alpha}{n \Delta}} \Delta.
\end{align*}
Minimizing the RHS over $\Delta$, we obtain 
\begin{align*}
\Delta=\frac{8 \alpha}{n} + \sqrt{\frac{64\alpha^2}{n^2} +\frac{8\alpha L^*}{n}}
\end{align*}
and consequently
\begin{align*}
\bbE [(L (\hw) + L(\hv)) - L^*] \le 
\frac{16 \alpha}{n} + 2 \sqrt{\frac{64\alpha^2}{n^2} +\frac{8\alpha
  L^*}{n}}
\le \frac{32 \alpha}{n} + \sqrt{\frac{32 \alpha L^*}{n}}. 
\end{align*}
\end{proof}
The theorem assumed that we have access to paired two-view data
with corresponding label. This is mainly for simplicity of presentation, 
and the analysis can be extended to non-paired labeled data and different
amount of labeled data.

\begin{rmk}
\label{rmk:multitask}
Assume that $\lambda_1 < 1$, \ie, the views are not perfectly correlated,
and $S^2 \ll B^2$, then we have in Lemma~\ref{lem:multitask} that 
\begin{align*}
D^2 \approx \frac{S^2}{1 - \lambda_1^2}
\end{align*}
and thus the complexity of the hypothesis class is effectively controlled by
$S^2$ only and not $B_w^2$ or $B_v^2$ (this can also be seen in the
objective~\eqref{e:multitask}, where the regularization parameters satisfy
$\gamma \ll \nu$),  and we obtain similar optimistic rate as in the multi-view
distillation setting. 
Note that the top canonical correlation comes into play here, fundamentally because
if $\lambda_1=1$, the regularizer $\frac{1}{2} \bbE \norm{\w^\top \x - \vv^\top \z}$
is not strongly convex in $\w$ and $\vv$ jointly (even though it is
$1$-strongly convex in each set of variable separately).
\end{rmk}

\section{Related work}
\label{sec:related}

We briefly review prior analysis that is most relevant to ours.

Under the LUPI setting, \citet{PechyonVapnik10a} focused on supervised learning with the $0$-$1$ loss, and proposed SVM based methods that can achieve faster convergence than without using privileged information; in their case, the complexity of the hypothesize class is controlled by its VC-dimension. 

For learning with the least squares loss, \citet{KakadeFoster07a} proposed to use  the predictor $\sbr{ \frac{\lambda_i}{n} \sum_{j=1}^n \x_j y_j }$ (dubbed the ``canonical shrinkage estimator'') for view $\x$, where $\lambda_i$'s are the canonical correlations between $\x$ and $\z$, which corresponds to the solution to the ERM objective regularized by a special ``canonical norm''.\footnote{But strictly speaking, it is not a norm if any $\lambda_i=0$ or $1$.} 
The authors showed that the shrinkage estimator has lower variance than the non-regularized ERM solution, while having similar bias. Their analysis relied heavily on the closed-form solution for least squares.

For jointly learning two (kernel-based) predictors $f(\x)$ and $g(\x)$ in a single view, \citet{RosenbBartlet07a} studied the Rademacher complexity of the function class $\calC=\cbr{(f,g): \gamma_f \norm{f}^2 + \gamma_g \norm{g}^2 + \nu \norm{f (\x) - g (\x)}^2 \le 1}$, and observed that as the tradeoff parameter for $\nu$ increases, the Rademacher complexity of $\calC$ decreases. This result is consistent with our Remark~\ref{rmk:multitask}.

In understanding how unlabeled data help with supervised learning, \citet{BalcanBlum10a} showed that suitably defined compatibility notion on the unlabeled data can reduce the search space for learning the optimal predictor. In the case of multi-view co-training~\citep{BlumMitchel98a}, the compatibility requires the binary predictions from both views to agree on unlabeled data. Under the assumption that views are independence given the label, the authors provide an efficiently algorithm for training linear separators from a single labeled example. 
In this work, we share the same intuition that compatibility between the views reduces the search space for learning, and propose to use the easy-to-optimize regularizer $\bbE \norm{\w^\top \x - \vv^\top \y}$ in the generalized PAC learning setting, which can be seen as a soft version of their compatibility constraint. 


\bibliographystyle{plainnat}

\bibliography{macp,macp-xref}
\clearpage
\appendix
\section{Stability for learning with smooth and non-negative loss}
\label{sec:stability}

An important property of smooth and non-negative loss is the self-boundedness~\citep[Section~12.1.3]{ShalevBen-David14a}: if $f(\w) \ge 0$ is convex and $\beta$-smooth, we have
\begin{align*}
  \norm{ \nabla f(\w) }^2 \le 2 \beta f (\w).
\end{align*}

The following stability of ERM follows~\citet[Lemma~4.2]{Srebro_12a} (with
slightly better constants than theirs), we provide the proof for completeness.
\begin{lem}[Stability for convex, smooth non-negative loss] 
\label{lem:stability}
  Consider a stochastic convex optimization problem 
  \begin{align*}
    F (\w) := \bbE_{\xi} \left[f (\w; \xi) \right]  + r (\w)
  \end{align*}
  where the loss $f (\w; \xi) \ge 0$ is $\beta$-smooth and $\lambda$-strongly convex in $\w$, the regularization $r (\w)$ is $\gamma$-strongly convex. Let $Z=\{ \xi_1,\dots,\xi_n \}$ be i.i.d. samples and 
  \begin{align*}
    \hw = \argmin_{\w \in \Omega} \ \hF (\w) \qquad \text{where}\quad \hF (\w) := \frac{1}{n} \sum_{i=1}^n f (\w; \xi_i)  + r (\w).
  \end{align*}
  Denote by $G (\w) := \bbE_{\xi} \left[f (\w; \xi) \right]$ and $\hG (\w) := \frac{1}{n} \sum_{i=1}^n f (\w; \xi_i)$ the expected loss and the empirical loss respectively. 
 For the regularized empirical risk minimizer $\hw$, we have for $(\lambda+\gamma) n \ge 8 \beta$ that 
    \begin{align*}
      \left( 1 - \frac{8 \beta}{(\lambda+\gamma) n} \right) \cdot \bbE_{Z} \left[ G (\hw) \right] \le  \bbE_{Z} \left[ \hG (\hw) \right].
    \end{align*}
\end{lem}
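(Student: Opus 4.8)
The plan is to prove this by an \emph{on-average replace-one stability} argument, following the template of \citet{Srebro_12a} but exploiting the self-bounding property stated just above (namely $\norm{\nabla f(\w)}^2 \le 2\beta f(\w)$) to obtain a \emph{multiplicative} rather than additive generalization bound. First I would introduce, for each $i$, an independent ``ghost'' draw $\xi_i'$ and let $Z^{(i)}$ denote the dataset with $\xi_i$ replaced by $\xi_i'$, with $\hw^{(i)} = \argmin_{\w\in\Omega}\hF^{(i)}(\w)$ the corresponding minimizer. The first key step is the symmetry identity $\bbE_Z[G(\hw) - \hG(\hw)] = \frac{1}{n}\sum_{i=1}^n \bbE[f(\hw^{(i)};\xi_i) - f(\hw;\xi_i)]$, which holds because $\xi_i'$ is fresh relative to $\hw$ while $\xi_i$ is fresh relative to $\hw^{(i)}$, so that $\bbE[f(\hw;\xi_i')] = \bbE[f(\hw^{(i)};\xi_i)] = \bbE[G(\hw)]$.

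Next I would establish the stability bound. Since each $f(\cdot;\xi_j)$ is $\lambda$-strongly convex and $r$ is $\gamma$-strongly convex, the empirical objective $\hF$ is $(\lambda+\gamma)$-strongly convex; applying the constrained first-order optimality and strong convexity at both $\hw$ (for $\hF$) and $\hw^{(i)}$ (for $\hF^{(i)}$), then adding the two inequalities and using $\hF^{(i)} - \hF = \frac{1}{n}(f(\cdot;\xi_i') - f(\cdot;\xi_i))$, yields $(\lambda+\gamma)\norm{\hw-\hw^{(i)}}^2 \le \frac1n[(f(\hw;\xi_i') - f(\hw^{(i)};\xi_i')) + (f(\hw^{(i)};\xi_i) - f(\hw;\xi_i))]$. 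Bounding each difference by convexity (for instance $f(\hw^{(i)};\xi_i) - f(\hw;\xi_i) \le \norm{\nabla f(\hw^{(i)};\xi_i)}\,\norm{\hw-\hw^{(i)}}$) and cancelling one power of $\norm{\hw-\hw^{(i)}}$ gives $\norm{\hw-\hw^{(i)}} \le \frac{1}{(\lambda+\gamma)n}(\norm{\nabla f(\hw;\xi_i')} + \norm{\nabla f(\hw^{(i)};\xi_i)})$.

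I would then combine the two pieces: bound each summand of the generalization identity by $\norm{\nabla f(\hw^{(i)};\xi_i)}\,\norm{\hw-\hw^{(i)}}$, substitute the stability bound, split the resulting cross term with AM--GM, and finally invoke the self-bounding inequality to replace every squared gradient norm by $2\beta$ times a loss value. Taking expectations and using $\bbE[f(\hw^{(i)};\xi_i)] = \bbE[f(\hw;\xi_i')] = \bbE[G(\hw)]$ collapses the right-hand side to a constant multiple of $\frac{\beta}{(\lambda+\gamma)n}\bbE[G(\hw)]$, so that rearranging gives the stated multiplicative inequality, with the hypothesis $(\lambda+\gamma)n \ge 8\beta$ ensuring the prefactor $1-\frac{8\beta}{(\lambda+\gamma)n}$ stays positive.

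The main obstacle is the bookkeeping in this final step: the entire gain comes from routing every gradient norm through self-boundedness so the bound stays proportional to $\bbE[G(\hw)]$ instead of a crude Lipschitz constant, and one must never break that multiplicative structure. Tracking the constants through the AM--GM split (which fixes the weights on the two loss terms) is precisely what pins down the numerical factor $8$ and whether it can be sharpened. A secondary point I would check carefully is that the constrained first-order optimality conditions are applied correctly, so the cross terms telescope cleanly even when $\Omega \ne \bbR^d$.
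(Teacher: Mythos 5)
Your proposal is correct and follows essentially the same route as the paper's proof: replace-one stability of the regularized ERM via the $(\lambda+\gamma)$-strong convexity of $\hF$, the convexity gradient bound, the self-bounding property $\norm{\nabla f(\w)}^2 \le 2\beta f(\w;\xi)$, and the closing expectation symmetries $\bbE\sbr{f(\hw^{(i)};\xi_i)} = \bbE\sbr{f(\hw;\xi_i^\prime)} = \bbE_Z\sbr{G(\hw)}$. Your only deviations are cosmetic reorganizations---invoking strong convexity at both minimizers rather than one, and deferring self-boundedness until after the AM--GM split---which if anything tighten the constant (your bookkeeping yields $4\beta$ in place of $8\beta$, which implies the stated bound since $G(\hw)\ge 0$).
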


\begin{proof}
 Denote by $Z^{(i)}$ the sample set that is identical to $Z$ except that the $i$-th sample $\xi_i$ is replaced by another random sample $\xi_i^\prime$, by $\hF^{(i)} (\w)$ the empirical objective defined using $Z^{(i)}$, \ie,
  \begin{align*}
    \hF^{(i)} (\w) := \frac{1}{n} \left ( \sum_{j\neq i} f (\w; \xi_i) +  f (\w; \xi_i^\prime) \right) + r (\w),
  \end{align*}
  and by $\hw^{(i)}=\argmin_{\w \in \Omega}\ \hF^{(i)} (\w)$ the empirical risk minimizer of $\hF^{(i)} (\w)$.

  By the definition of the empirical objectives, we have
  \begin{align} 
    \hF (\hw^{(i)}) - \hF (\hw) 
    & = \frac{f (\hw^{(i)}; \xi_i) - f (\hw; \xi_i) }{n} 
    + \frac{\sum_{j\neq i} f (\hw^{(i)}; \xi_i) - f (\hw; \xi_i) }{n} 
    + r (\hw^{(i)}) - r (\hw) \nonumber \\
    & = \frac{f (\hw^{(i)}; \xi_i) - f (\hw; \xi_i) }{n} 
    + \frac{ f (\hw; \xi_i^\prime) - f (\hw^{(i)}; \xi_i^\prime) }{n} 
    + \left( \hF^{(i)} (\hw^{(i)}) - \hF^{(i)} (\hw) \right) \nonumber  \\ 
    \nonumber 
    & \le \frac{ f (\hw^{(i)}; \xi_i) - f (\hw; \xi_i) }{n} 
    + \frac{ f (\hw; \xi_i^\prime) - f (\hw^{(i)}; \xi_i^\prime) }{n}
  \end{align}
  where we have used the fact that $\hw^{(i)}$ is the minimizer of $\hF^{(i)} (\w)$ in the inequality.

  On the other hand, it follows from the $(\lambda + \gamma)$-strong convexity of $\hF (\w)$ that 
  \begin{align*} 
    \hF (\hw^{(i)}) - \hF (\hw) \ge \frac{(\lambda + \gamma)}{2} \norm{ \hw^{(i)} - \hw }^2,
  \end{align*}
  and therefore
  \begin{align} \label{e:stability-strong-convexity-smoothness}
    \frac{(\lambda + \gamma) n }{2} \norm{ \hw^{(i)} - \hw }^2 \le \left( f (\hw^{(i)}; \xi_i) - f (\hw; \xi_i) \right) + \left( f (\hw; \xi_i^\prime) - f (\hw^{(i)}; \xi_i^\prime) \right).
  \end{align}

  By the $\beta$-smoothness and self-boundedness property, we have
  \begin{align}
    & \left( f (\hw^{(i)}; \xi_i) - f (\hw; \xi_i) \right) + \left(  f (\hw; \xi_i^\prime)  - f (\hw^{(i)}; \xi_i^\prime) \right) \nonumber \\
    & \le \left< \nabla f(\hw^{(i)}; \xi_i),\, \hw^{(i)} - \hw \right> 
    + \left< \nabla f(\hw; \xi_i^\prime),\, \hw - \hw^{(i)} \right> 
    \nonumber \\
    & \le \left(\norm{\nabla f(\hw^{(i)}; \xi_i)} + \norm{\nabla f(\hw; \xi_i^\prime)} \right) \norm{\hw^{(i)} - \hw}  \nonumber \\  \label{e:stability-smoothness-one-sample-difference}
    & \le \left( \sqrt{2 \beta f(\hw^{(i)}; \xi_i)} + \sqrt{2 \beta f(\hw; \xi_i^\prime)} \right) \norm{\hw^{(i)} - \hw}.
  \end{align}

  Combining~\eqref{e:stability-smoothness-one-sample-difference} and~\eqref{e:stability-strong-convexity-smoothness} yields
  \begin{align*}
    \norm{\hw^{(i)} - \hw}  \le \frac{\sqrt{8 \beta}}{(\lambda+\gamma) n} \left(  \sqrt{f(\hw^{(i)}; \xi_i)} + \sqrt{ f(\hw; \xi_i^\prime)}  \right).
  \end{align*}

  Plugging this into~\eqref{e:stability-smoothness-one-sample-difference}, we obtain
  \begin{align*} 
    \left( f (\hw^{(i)}; \xi_i) - f (\hw; \xi_i) \right) + \left(  f (\hw; \xi_i^\prime)  - f (\hw^{(i)}; \xi_i^\prime) \right)
    & \le \frac{4\beta}{(\lambda+\gamma) n} \left(  \sqrt{f(\hw^{(i)}; \xi_i)} + \sqrt{ f(\hw; \xi_i^\prime)}  \right)^2 \\
    & \le \frac{8\beta}{(\lambda+\gamma) n} \left( f(\hw^{(i)}; \xi_i) + f(\hw; \xi_i^\prime) \right)
  \end{align*}
  where we have used the fact that $(a+b)^2 \le 2 (a^2 + b^2)$ in the last inequality. 

  We obtain the desired result by taking expectation of the above inequality, and realizing that 
  \begin{align*}
    \bbE_{Z\cup \{\xi_i^\prime\}} \left[f (\hw^{(i)}; \xi_i) \right] = \bbE_{Z\cup \{\xi_i^\prime\}} \left[f (\hw; \xi_i^\prime)\right] = \bbE_{Z} \left[ G (\hw) \right],\\
    \bbE_{Z} \left[ f(\hw; \xi_i) \right] = \bbE_{Z^{(i)}} \left[ f(\hw^{(i)}; \xi_i^\prime) \right]  =  \bbE_{Z} \left[ \hG (\hw) \right].
  \end{align*}

\end{proof}

\end{document}